\documentclass[12pt]{l4dc2023}


\usepackage{graphics} 
\usepackage{mathptmx} 
\usepackage{bm}
\usepackage{booktabs}
\usepackage{wrapfig}
\usepackage{xcolor}
\usepackage{makecell}
\usepackage{caption}
\DeclareCaptionLabelFormat{andtable}{#1~#2  \&  \tablename~\thetable}
\DeclareCaptionLabelFormat{andfigure}{#1~#2  \&  \figurename~\thefigure}

\newcommand{\figref}{Fig.~\ref}
\newcommand{\tabref}{Tab.~\ref}
\newcommand{\secref}{Sec.~\ref}

\DeclareMathOperator*{\argmax}{arg\,max}


\title[Stability Attention]{Learning Stability Attention in Vision-based\\ End-to-end Driving Policies}
\usepackage{times}


\author{\Name{Tsun-Hsuan Wang}$^{\ast}$ \Email{tsunw@mit.edu}\\ 
\Name{Wei Xiao}\thanks{The authors are with equal contributions.}
\Email{weixy@mit.edu}\\ 
\Name{Makram Chahine}
\Email{chahine@mit.edu}\\ \Name{Alexander Amini}
\Email{amini@mit.edu} \\
\Name{Ramin Hasani}
\Email{rhasani@mit.edu}\\ \Name{Daniela Rus}  \Email{rus@mit.edu}\\
\addr Computer Science and Artificial Intelligence Lab, MIT, Cambridge, MA, USA.
}

\ExecuteOptions{final}

\begin{document}

\maketitle

\begin{abstract}

Modern end-to-end learning systems can learn to explicitly infer control from perception. However, it is difficult to guarantee stability and robustness for these systems since they are often exposed to unstructured, high-dimensional, and complex observation spaces (e.g., autonomous driving from a stream of pixel inputs). 
We propose to leverage control Lyapunov functions (CLFs) to
equip end-to-end vision-based policies with stability properties and introduce stability attention in CLFs (att-CLFs) to tackle environmental changes and improve learning flexibility. We also present an uncertainty propagation technique that is tightly integrated into att-CLFs. 
We demonstrate the effectiveness of att-CLFs via comparison with classical CLFs, model predictive control, and vanilla end-to-end learning in a photo-realistic simulator and on a real full-scale autonomous vehicle.

\end{abstract}

\begin{keywords}%
  End-to-end Learning, Stability, Autonomous Driving %
\end{keywords}
\section{Introduction}
\label{sec:intro}



End-to-end representation learning systems are of effective methods to achieve autonomy in high-dimensional observation spaces such as vision where ground-truth states are unavailable (situations that are difficult to solve by planning and control algorithms \citep{primbs1999nonlinear}).   The overarching challenge for end-to-end systems is, however, guaranteeing their stability and robustness, which are necessary conditions for their deployment in the real world. The matters of stability and robustness of these systems are more critical for high relative degree systems such as autonomous driving. It is also more challenging to derive stability and robustness conditions for end-to-end systems as their observation space is large (e.g., streams of pixel inputs), resulting in increased uncertainty. For instance, in autonomous driving, even a slight change in the sunlight could drastically affect the output behavior of the autonomous control system.
Many recent works marry the expressive power of neural networks with classical control modules that empower learning-based models with stability properties \citep{chang2019neural,dai2021lyapunov} and safety guarantee \citep{Xiao2021bnet}. Advances in differentiable optimization further facilitate seamless integration of control theory to deep neural networks \citep{Amos2017,Amos2018}. While learning-based approaches have demonstrated promising potential in state-space control, there are limited attempts to extend this paradigm to end-to-end learning systems that infer control directly from perception.

In this paper, we propose to learn stability attention in control Lyapunov functions (CLFs) to equip end-to-end vision-based learning systems with stability properties. Lyapunov functions \citep{Sontag1983,Artstein1983,Freeman1996} are powerful tools in verifying the stability of a dynamical system. They have been vastly applied to control systems to formulate stability conditions via CLFs \citep{Aaron2012}. 
The time domain is discretized, and the state is assumed to be constant within each time step. The control problem then becomes a sequence of quadratic programs (QPs). 
In spite of this simple yet effective formulation, it is not straightforwardly ready for integration with end-to-end learning models due to (i) the incapability to respond to environmental changes based on perceptual inputs (ii) limited flexibility from exponentially stabilizing behavior.



To this end, we make CLFs dependent on the observation and incorporate them into end-to-end learning via differentiable QPs \citep{Amos2017}. Then, we show how to properly construct LF to achieve stability attention mechanism (att-CLFs) that can handle environmental changes and higher relative degree. 
\textbf{Contributions:} (i) we leverage CLFs to equip vision-based end-to-end driving policies with stability properties and introduce stability attention in CLFs to tackle environmental changes and improve learning flexibility (ii) we present an uncertainty propagation method tailored to differentiable optimization (iii) we compare our method with classical CLFs, model predictive control and direct end-to-end learning methods, and verify our method in a photo-realistic simulator and deploy our model on a real full-scale autonomous vehicle.

\begin{figure}[t]
	\centering
	\includegraphics[width=0.8\linewidth]{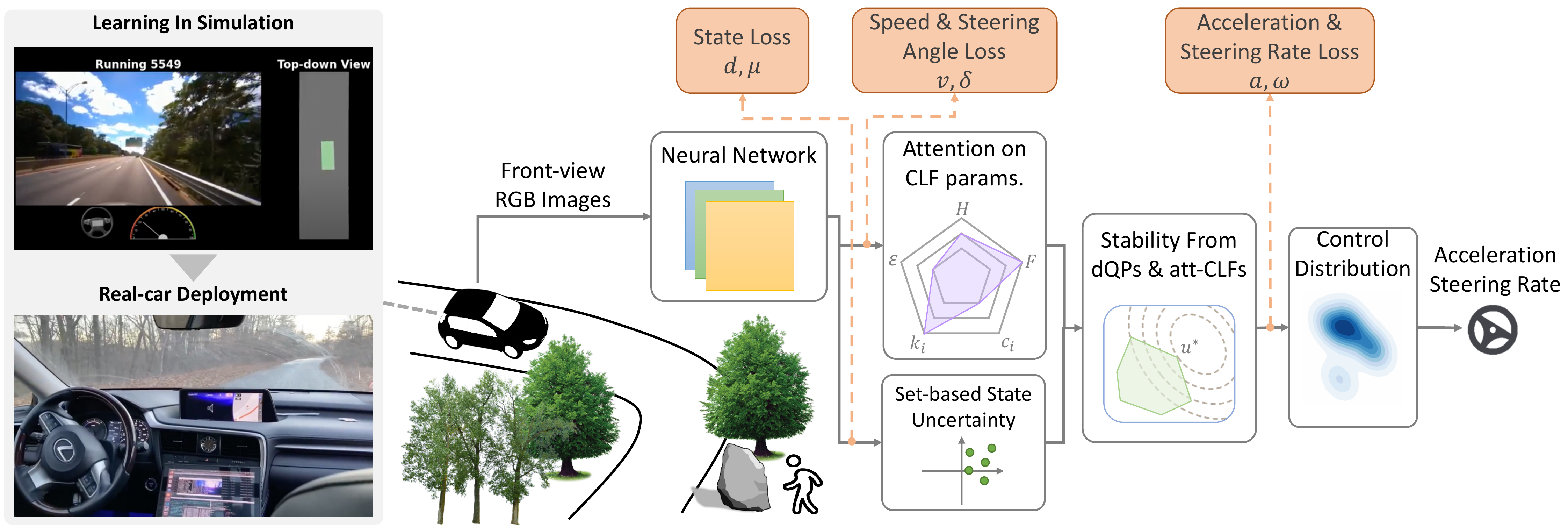}
	\caption{An overview of our end-to-end learning system with att-CLFs.}
	\label{fig:learn_frame}
	\vspace{-5mm}
\end{figure}







\section{Preliminaries}
\label{sec:background}
Consider an affine control system of the form:
\begin{equation}\label{eqn:affine}
\dot{\bm{x}}=f(\bm x)+g(\bm x)\bm u,
\end{equation}
where $\bm x\in\mathbb{R}^{n}$, $f:\mathbb{R}^{n}\rightarrow\mathbb{R}^{n}$
and $g:\mathbb{R}^{n}\rightarrow\mathbb{R}^{n\times q}$ are locally
Lipschitz, and $\bm u\in U\subset\mathbb{R}^{q}$, where $U$ denotes a control constraint set. The relative degree definition of a function is omitted, but can be found in \citep{Xiao2019}.

\begin{definition}  \label{def:clf}
 	({\it Control Lyapunov function} \citep{Aaron2012}) A continuously differentiable function $V: \mathbb{R}^n\rightarrow \mathbb{R}$ is a globally and exponentially stabilizing control Lyapunov function (CLF) for system (\ref{eqn:affine}) if there exist $c_1 >0, c_2>0, c_3>0$ such that for all $\bm x\in \mathbb{R}^n$, $c_1||\bm x||^2 \leq V(\bm x) \leq c_2 ||\bm x||^2$ and
 	\begin{equation}\label{eqn:clf}
 	\underset{u\in U}{inf} \lbrack L_fV(\bm x)+L_gV(\bm x) \bm u + c_3V(\bm x)\rbrack \leq 0.
 	\end{equation}
 	where $L_f, L_g$ denote the Lie derivatives \citep{Khalil2002} along $f$ and $g$, respectively.
 \end{definition}
 
 \begin{theorem} \label{thm:clf}
 \citep{Aaron2012} Given an exponentially stabilizing CLF $V$ as in Def. \ref{def:clf}, any Lipschitz continuous controller $ \bm u \in K_{clf}(\bm x)$, with
 $K_{clf}(\bm x) := \{\bm u\in U: L_fV(\bm x)+L_gV(\bm x) \bm u + c_3V(\bm x) \leq 0\},$
 exponentially stabilizes system (\ref{eqn:affine}) to the origin.
 \end{theorem}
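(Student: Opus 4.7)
The plan is to use $V$ itself as the Lyapunov certificate along the closed-loop trajectories and convert the CLF inequality into an exponential decay estimate on $\|\bm x(t)\|$. Because $\bm u(\bm x)$ is Lipschitz and $f,g$ are locally Lipschitz, the closed-loop vector field $f(\bm x)+g(\bm x)\bm u(\bm x)$ is locally Lipschitz, so for any initial condition $\bm x(0)$ there is a unique solution $\bm x(t)$ on a maximal interval of existence. I would first invoke this to justify that differentiating $V$ along the trajectory is well defined.

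Next I would compute $\dot V$ along the closed-loop system. Since $V$ is continuously differentiable and $\dot{\bm x}=f(\bm x)+g(\bm x)\bm u$, the chain rule gives $\dot V(\bm x(t))=L_fV(\bm x)+L_gV(\bm x)\bm u$. Because $\bm u(\bm x)\in K_{\mathrm{clf}}(\bm x)$ for all $\bm x$, the defining inequality of $K_{\mathrm{clf}}$ yields directly
\begin{equation*}
\dot V(\bm x(t))\leq -c_3\,V(\bm x(t)).
\end{equation*}
This is a scalar differential inequality on $V\circ\bm x$. Applying the comparison lemma (equivalently, Gr\"onwall) to $w(t):=V(\bm x(t))$ gives $V(\bm x(t))\leq V(\bm x(0))\,e^{-c_3 t}$ for all $t\geq 0$ in the interval of existence.

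Finally I would translate the decay on $V$ into a decay on $\|\bm x\|$ using the quadratic sandwich bounds from Definition \ref{def:clf}. Dividing $c_1\|\bm x(t)\|^2\leq V(\bm x(t))$ by $c_1$ and upper-bounding via $V(\bm x(0))\leq c_2\|\bm x(0)\|^2$, I obtain
\begin{equation*}
\|\bm x(t)\|\leq\sqrt{\tfrac{c_2}{c_1}}\,\|\bm x(0)\|\,e^{-\tfrac{c_3}{2}t},
\end{equation*}
which is the definition of global exponential stability to the origin with rate $c_3/2$. The a priori boundedness of $\|\bm x(t)\|$ also rules out finite-time escape, so the solution extends to $[0,\infty)$ and the estimate holds globally.

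The routine parts are the chain rule computation and the Gr\"onwall step; the only subtle point is ensuring that the CLF inequality, which is stated as an infimum over $\bm u\in U$, is actually attained (or at least respected) by the chosen Lipschitz selector $\bm u(\bm x)\in K_{\mathrm{clf}}(\bm x)$. That is handled by hypothesis since membership in $K_{\mathrm{clf}}(\bm x)$ is exactly the pointwise realization of the infimum constraint, so no further argument on measurable/continuous selection (cf.\ classical Artstein-type theorems) is needed here.
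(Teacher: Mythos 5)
Your proof is correct: the paper does not prove this theorem itself (it is stated as a cited result from \citep{Aaron2012}), and your argument --- chain rule to get $\dot V \leq -c_3 V$ from membership in $K_{clf}$, the comparison/Gr\"onwall lemma to get $V(\bm x(t))\leq V(\bm x(0))e^{-c_3 t}$, the quadratic sandwich bounds to get $\|\bm x(t)\|\leq \sqrt{c_2/c_1}\,\|\bm x(0)\|e^{-c_3 t/2}$, plus boundedness to exclude finite-time escape --- is precisely the standard proof given in that reference. Your closing observation is also the right one: no selection-theoretic argument (Artstein/Sontag) is needed because the theorem hypothesizes a Lipschitz controller already lying in $K_{clf}(\bm x)$, so the infimum condition in Def.~\ref{def:clf} serves only to guarantee $K_{clf}(\bm x)$ is nonempty.
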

 
\begin{figure}[t] 
	\centering
	\includegraphics[width=0.9\linewidth]{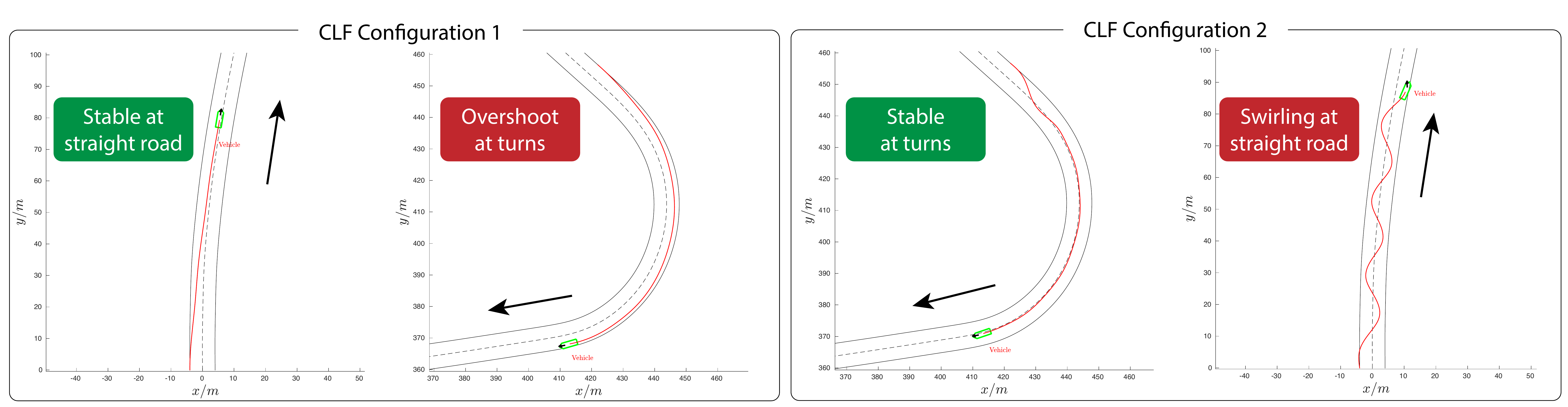}
	\caption{CLF-based QPs for lane tracking. A slow convergent CLF (left) may work well in a straight road, but fails to work in sharp turn. A fast convergent CLF (right) may work well in sharp turn, but fails to work in a straight road due to the myopic property of the CLF-based QP method.  
	}
	\label{fig:GsCLF}
	\vspace{-4mm}
\end{figure}

Consider an optimal control problem with the cost defined as $\frac{1}{2}\bm u(t)^TH\bm u(t) + F^T\bm u(t)$, where $H\in\mathbb{R}^{q\times q}$ is positive definite, $F\in\mathbb{R}^q$. Suppose the objective is to stabilize system (\ref{eqn:affine}) to the origin. We can formulate the following CLF-based optimization \citep{Aaron2012}: 
\begin{equation} \label{eqn:neuron0}
\begin{aligned}
\bm u^*(t) = \arg\min_{\bm u(t)\in U} \frac{1}{2}\bm u(t)^TH\bm u(t) + F^T\bm u(t), \\
\text{s.t.  }
 L_fV(\bm x) + L_gV(\bm x)\bm u + \epsilon V(\bm x) \leq 0, \end{aligned}
\end{equation}
where $\epsilon > 0$. The parameter $\epsilon$ in the CLF constraint and how we define $V(\bm x)$ significantly affect the system trajectory in converging to the origin. In order to solve (\ref{eqn:neuron0}), as in \citep{Aaron2012}, one usually discretizes the time, and the CLF constraint is linear in control since the state value is fixed at the beginning of the discrete interval. Therefore, each optimization is a quadratic program (QP). The optimal control obtained by solving each QP is held constant and applied at the current time interval to system (\ref{eqn:affine}). This process repeats until the final time. Note that the CLF constraint in (\ref{eqn:neuron0}) may conflict with the control bound. One usually relaxes the CLF constraint by replacing 0 in the right hand side with a slack variable, and also minimizes the slack variable in the cost function.

\noindent\textbf{Limitations of CLFs.} The aforesaid setup has limited flexibility in real world applications due to: 

\begin{itemize}
    \item The desired trajectories may not always exhibit exponentially stabilizing behavior.
    \item Under heavy environmental state changes, global stability via CLFs are almost impossible to obtain. Especially for complex control synthesis like vision-based driving, the definition of a CLF may depend on environmental variables.
    \item The CLF-based QP is solved point-wise yet can lead to aggressiveness and sub-optimalilty.
    \vspace{-3mm}
\end{itemize}

\begin{example} Suppose we wish to stabilize the vehicle to the lane center, the definition of a CLF would also involve  road curvature. When the road has sharp turns that are unexpected or not predictable, it is almost impossible to find a globally stabilizing CLF, as shown in Fig. \ref{fig:GsCLF}.
Regarding the last point, it means that the optimality of control synthesis is limited, and the system may overshoot around the equilibrium due to all possible perturbations (such as noisy dynamics, road curvature variation, etc.), as shown by the left case in Fig. \ref{fig:GsCLF}. We may formulate a receding horizon optimization to address this issue. However, the optimization would no longer be convex. 
\end{example}
In this paper, we aim to address these limitations with the proposed attention CLFs (att-CLFs).


\section{Problem Formulation and Approach}

\noindent\textbf{Problem setup.} Given (i) a sensor measurement $\bm z\in\mathbb{R}^d$ (e.g., front-view RGB images of the vehicle in this work), where $d\in\mathbb{N}$ is its dimension, (ii) vehicle dynamics (\ref{eqn:affine}), (iii) a nominal controller $h^\star(\bm x, \bm z)=\bm u^\star$ (such as a model predictive controller), and (iv) a learning-based end-to-end perception-to-control model $h(\bm z | \theta)=\bm u$, parametrized by $\theta$, our goal is to find optimal parameters:
\begin{equation}\label{eq:prob_form}
    \theta^\star = \underset{\theta}{\arg\min}~\mathbb{E}_{\bm z}[l(h^\star(\bm x, \bm z), h(\bm z|\theta))]
\end{equation}
where $l$ is a similarity measure (or a loss function). We also wish to equip the learning-based end-to-end controller $h$ with stability properties which we will describe in the following.

\noindent\textbf{Approach.} Our approach to solve the above problem is based on an end-to-end trainable framework that includes a differentiable optimization layer \citep{Amos2017} formulated by the proposed att-CLF based QP. The proposed att-CLF is observation dependent, and thus, it is adaptive to changing environments. The model takes front-view RGB images $\bm z$ as inputs whose features are extracted by CNN. In order to enforce temporal coherency and prevent the model from being susceptible to sudden input change (e.g., affected by sunlight), we employ LSTM after the CNN.
The outputs of the LSTM are used to construct the att-CLF, as well as to determine other optimization parameters in the att-CLF based QP. The outputs of the QP are the controls $\bm u$ of the vehicle. The model is trained using imitation learning where the ground truth control is obtained by a model predictive controller. We also propose to propagate state uncertainty to control to improve performance in real-world deployment. The overview is shown in \figref{fig:learn_frame}.

\section{Method}
\label{sec:method}
In this section, we introduce our contribution which is the notion of CLFs with stability attention (att-CLFs) and show how we can propagate uncertainty to improve end-to-end control. 

\begin{wrapfigure}[11]{R}{0.3\linewidth}
	\centering
	\includegraphics[width=0.6\linewidth]{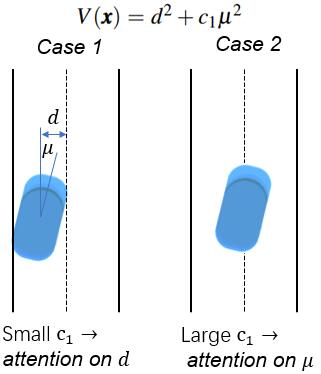}
	\caption{Lane tracking attention under different scenarios. }%
	\label{fig:attention0}%
\end{wrapfigure}

\subsection{CLFs with Stability Attention}
\label{sec:converge}
In this work, we only focus on quadratic CLFs due to its effectiveness and ubiquity \citep{Aaron2012}.
Specifically, we construct dependencies of att-CLFs on the environment, i.e., conditioning on sensor measurement $\bm z$:
\begin{equation}\label{eqn:lbf}
    V(\bm x, \bm z) = \bm x^TP(\bm z)\bm x
\end{equation}
where $P(\bm z)\in \mathbb{R}^{n\times n}$ is a positive definite matrix modelled by a neural network that takes $\bm z$ as inputs. However, learning to directly predict $P(\bm z)$ for valid and effective att-CLF is challenging. In the following, we describe how to properly construct $P(\bm z)$ for better learning. 
First, we decompose $P(\bm z) = Q^T(\bm z)\Lambda(\bm z) Q(\bm z)$, where $Q(\bm z)$ is an orthogonal matrix and $\Lambda(\bm z)$ is a diagonal matrix whose components are eigenvalues of $P(\bm z)$. We define a notion of stability attention in an att-CLF as:

\begin{definition}(Stability Attention) \label{def:attention}
The stability attention of an att-CLF $V(\bm x, \bm z)$ in (\ref{eqn:lbf}) under an observation $\bm z$ is defined by the state components of $Q(\bm z)\bm x$ (with change of basis) corresponding to the top $K\in\mathbb{N}$ largest eigenvalues of $P(\bm z)$ (i.e., the diagonal entries of $\Lambda(\bm z)$).
\end{definition}

\begin{example} \label{exam:track}
 For the autonomous driving example in Fig. \ref{fig:attention0}, we wish to stabilize the off-center distance $d \in\mathbb{R}$ and local heading error $\mu\in\mathbb{R}$ with respect to the lane center line to zero. An att-CLF can be defined as $V(\bm x,\bm z) = [d, \mu]^TP(\bm z)[d, \mu]$, where $P = [1, 0;0, c_1(\bm z)]$. Small $c_1(\bm z)$ denotes that we pay more attention to the off-center distance $d$, in which case a controller can quickly drive the vehicle position to the lane center if the vehicle is far from it. However, when the vehicle is near the lane center line, we wish to have a larger $c_1(\bm z)$, i.e., pay more attention to the local heading error $\mu$. This can make the vehicle orientation better aligned with the road direction.
\end{example}
\noindent In a CLF as in Def. \ref{def:clf}, the attention is fixed (constant $P$), leading to a fixed exponentially convergent behavior. In contrast, the attention of an att-CLF depends on $\bm z$, allowing flexibility to learn different types of behaviors and straightforward interpretation.
The dependency on observation makes att-CLFs become versatile and thus can imitate complex policies. 
When stabilizing full states, we can construct a diagonal $P(\bm z)$ (an att-CLF has relative degree one); otherwise, we proceed as follows.


\noindent\textbf{High relative degree.} In an att-CLF, it is possible that $\frac{\partial V(\bm x, \bm z)}{\partial {\bm x}}g(\bm x) = 0, \forall \bm x, \forall \bm z$, which prevents us from obtaining a stabilizing controller as in (\ref{eqn:neuron0}). This may happen when we only wish to stabilize partial states (like the driving problem in this work),
which introduces a high relative degree issue \citep{Xiao2019}. To address this, we propose state-feedback based att-CLFs, which can also be applied to non-input-to-state linearizable systems \citep{Khalil2002}. 
Suppose $\bm x := (x_1,\dots, x_n)$, and we only wish to stabilize $(x_1,\dots, x_m)$, where $m < n$. Suppose the relative degree of $x_i, i\in\{1,\dots,m\}$ is $r_i\in\mathbb{N}$, and define $n_0 = \sum_{i\in\{1,\dots,m\}}r_i$. We define the first derivative of $x_i$ as a new state $\xi_{i,1}\in\mathbb{R}$, and then define the derivative of $\xi_{i,1}$ as another new state $\xi_{i,2}\in\mathbb{R}$. This can be done recursively until we define a new state $\xi_{i,r_i-1}\in\mathbb{R}$ whose first derivative will be a linear function of $\bm u$ (i.e., its relative degree is one) for system (\ref{eqn:affine}) under a certain state $\bm x$.
 Then, in order to drive $x_i$ to 0, the desired state for $\xi_{i,r_i - 1}$ is $\hat \xi_{i,r_i - 1} := -l_i(\bm z)x_i - l_{i,1}(\bm z)\xi_{i,1}\dots -l_{i,r_i-2}(\bm z)\xi_{i,r_i-2}$, where $l_i(\bm z) > 0, l_{i,1}(\bm z)> 0,\dots, l_{i,r_i-2}(\bm z)> 0$. Intuitively, making $r_i-1$'th derivative being inversely proportional to lower degree states ($0$' to $r_i-2$'th) allows to drive eventually lower degree states to $0$.
 
Let $\bm y := (x_1,\dots,x_m, \xi_{1,1}, \dots, \xi_{1,r_1-1}, \dots, \xi_{m,1}, \dots, \xi_{m,r_m-1})$ be the new state vector of the transformed dynamics $\dot{\bm y} = \hat f(\bm y) + \hat g(\bm y)\bm u$, where $\hat f:\mathbb{R}^{n_0}\rightarrow \mathbb{R}^{n_0}, \hat g:\mathbb{R}^{n_0}\rightarrow \mathbb{R}^{n_0\times q}$ are determined by the above transformation process. We define a state-feedback based att-CLF with relative degree one, which stabilizes actual state $\xi_{i,r_i-1}$ to the desired state $\hat{\xi}_{i,r_i-1}$:
\begin{equation} \label{eqn:s-clf}
    V(\bm y, \bm z) = \sum_{i = 1}^mc_i(\bm z)(\frac{1}{l_i(\bm z)}\xi_{i,r_i-1} - \frac{1}{l_i(\bm z)}\hat \xi_{i,r_i-1})^2 =  \sum_{i = 1}^m c_i(\bm z)(x_i + k_{i,1}(\bm z)\xi_{i,1}\dots  + k_{i,r_i-1}(\bm z)\xi_{i,r_i-1})^2
\end{equation}
where $c_i(\bm z) > 0,\dots, c_m(\bm z)>0$, $k_{i,1}(\bm z) = \frac{l_{i,1}(\bm z)}{l_i(\bm z)}, \dots, k_{i,r_i-2}(\bm z) = \frac{l_{i,r_i-2}(\bm z)}{l_i(\bm z)}, k_{i,r_i-1}(\bm z) = \frac{1}{l_i(\bm z)}$. 
We can then construct $P(\bm z)$ with the new state vector $\bm y$ and rewrite the att-CLF in matrix form:
\begin{equation} 
    V(\bm y, \bm z) = \bm y^T P(\bm z)\bm y = (Q(\bm z)\bm y)^T\Lambda(\bm z)(Q(\bm z)\bm y),~~~~~\text{where}
\end{equation}
$$\small
Q(\bm z) = \left[
\begin{array}[c]{cc}%
    I_{m\times m} & k_{block}(\bm z) \\
    0_{(n_0-m)\times m} &  0_{(n_0-m)\times(n_0-m)}
\end{array}
\right], 
\Lambda(\bm z) = \left[
\begin{array}[c]{cc}%
   
    c_{m\times m} & 0_{m\times (n_0-m)}\\
 0_{(n_0-m)\times m}&
 0_{(n_0-m)\times(n_0-m)}
\end{array}
\right].
$$
where $k_{block}(\bm z)$ is a block diagonal matrix composed by vectors $k_i(\bm z) = (k_{i,1}(\bm z),\dots, k_{i,r_i - 1}(\bm z)), i\in\{1\dots, m\}$, $c_{m\times m}(\bm z)$ is a diagonal matrix composed by $c_i(\bm z), i\in \{1,\dots,m\}$, $0_{i\times j}$ denotes a zero matrix with dimension $i\times j$, and $I_{m\times m}$ denotes an identity matrix with dimension $m\times m$. Note that $k_i(\bm z),c_i(\bm z)$ are the output of the previous layer (LSTM).

As per Def. \ref{def:attention}, the stability attention in (8) is the top $K$ largest diagonal entries of $\Lambda(\bm z)$, and the corresponding state is $Q(\bm z)\bm y$ after change of basis. We can get a similar att-CLF constraint as in (\ref{eqn:clf}):
{\begin{equation}\label{eqn:dclf}
    \frac{\partial V(\bm y, \bm z)}{\partial {\bm y}}\hat f(\bm y) + \frac{\partial V(\bm y, \bm z)}{\partial {\bm y}}\hat g(\bm y)\bm u + \frac{\partial V(\bm y, \bm z)}{\partial \bm z}\dot{\bm z} + \epsilon V(\bm y, \bm z) \leq 0,
\end{equation}
}In the above, the term $\frac{\partial V(\bm y, \bm z)}{\partial \bm z}\dot{\bm z}$ can be evaluated using the bound of $\dot {\bm z}$ if it is available to ensure robustness to the input $\bm z$. We have the following theorem to prove the stability:.
\begin{theorem} \label{thm:att-clf}
Given a state-feedback based att-CLF as in (\ref{eqn:s-clf}), any control $\bm u$ that satisfies the att-CLF constraint (\ref{eqn:dclf}) stabilizes the partial state $(x_1,\dots, x_m)$ to the origin for system (\ref{eqn:affine}).
\end{theorem}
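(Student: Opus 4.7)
The plan is to extract exponential decay of $V$ from the constraint, peel off the quadratic sum structure to get exponential decay of each tracking-error term, and then cascade this down to the stabilized states $x_i$ through the chain-of-integrator form of the transformed dynamics.

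First, I would note that the left-hand side of (\ref{eqn:dclf}) is precisely $\dot{V}(\bm y(t),\bm z(t)) + \epsilon V(\bm y(t),\bm z(t))$ computed along trajectories of $\dot{\bm y} = \hat f(\bm y)+\hat g(\bm y)\bm u$ with $\bm z$ evolving in time (the $\frac{\partial V}{\partial \bm z}\dot{\bm z}$ term accounts for the environment change). Any $\bm u$ satisfying (\ref{eqn:dclf}) therefore yields $\dot{V} \le -\epsilon V$. The comparison lemma (as in Khalil, Lemma~3.4) then gives $V(\bm y(t),\bm z(t)) \le V(\bm y(0),\bm z(0))\,e^{-\epsilon t}$, so $V \to 0$ exponentially.

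Next I would exploit the specific form of $V$ in (\ref{eqn:s-clf}). Since $c_i(\bm z)>0$ (and assumed bounded away from $0$), the exponential decay of $V$ forces each summand to decay: defining $s_i(t) := x_i + k_{i,1}(\bm z)\xi_{i,1} + \cdots + k_{i,r_i-1}(\bm z)\xi_{i,r_i-1}$, we obtain $s_i(t)\to 0$ exponentially. By construction $k_{i,j}=l_{i,j}/l_i$ and $k_{i,r_i-1}=1/l_i$, so multiplying by $l_i(\bm z)$ yields $l_i s_i = \xi_{i,r_i-1} - \hat{\xi}_{i,r_i-1}$, i.e.\ the actual $(r_i{-}1)$st derivative tracks the desired $\hat{\xi}_{i,r_i-1}$ with a vanishing error.

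Using the identification $\xi_{i,j} = x_i^{(j)}$ that comes from the state-augmentation construction, this can be rewritten as a time-varying linear ODE in $x_i$ driven by the vanishing input $s_i$:
\begin{equation*}
x_i^{(r_i-1)} + l_{i,r_i-2}(\bm z)x_i^{(r_i-2)} + \cdots + l_{i,1}(\bm z)\dot x_i + l_i(\bm z) x_i = l_i(\bm z)\, s_i.
\end{equation*}
I would finish by arguing in cascade: for $r_i=1$ we already have $s_i = x_i$ so $x_i\to 0$ directly; for $r_i\ge 2$, under the standing sign/boundedness assumptions on $l_i(\bm z),l_{i,j}(\bm z)$ that make the associated (frozen-$\bm z$) polynomial Hurwitz, the homogeneous part is exponentially stable and the forcing $l_i s_i$ vanishes exponentially, so by an input-to-state stability argument $x_i(t)\to 0$ as $t\to \infty$ for each $i\in\{1,\dots,m\}$.

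The main obstacle I anticipate is the time-varying nature of $\bm z(t)$ inside the $l_{i,j}(\bm z)$, which strictly speaking requires more than pointwise positivity to guarantee exponential stability of the cascade (Hurwitz of the frozen polynomial at each $t$, plus slow-variation or uniform bounds on $\dot{\bm z}$, or a Lyapunov argument tailored to the LTV system). The other delicate point is justifying that the $\frac{\partial V}{\partial \bm z}\dot{\bm z}$ term in (\ref{eqn:dclf}) faithfully captures the true time derivative of $V$ along trajectories when $\dot{\bm z}$ is only bounded rather than known; I would handle this by using the worst-case bound noted right after (\ref{eqn:dclf}) so that the comparison-lemma step is preserved.
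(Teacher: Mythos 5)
Your proposal is correct and shares the paper's first half verbatim in spirit: both read the constraint (\ref{eqn:dclf}) as $\dot V + \epsilon V \le 0$ along trajectories and conclude exponential decay of $V$ (the paper cites the Lyapunov stability theorem where you invoke the comparison lemma --- same step). Where you diverge is the second half. The paper argues on the level set $V=0$: it splits into the case where all components vanish and the case where each summand vanishes, i.e.\ $\xi_{i,r_i-1} = \hat\xi_{i,r_i-1}$, and then asserts (citing Khalil) that this relation is the stabilizing state-feedback law for the integrator chain, so $x_i \to 0$. This is essentially a sliding-manifold argument that quietly treats the trajectory as if it \emph{reaches} the manifold $V=0$, when in fact $V$ only decays to zero asymptotically. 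Your route repairs exactly this: you keep the off-manifold error $s_i$ explicit, note that $l_i s_i = \xi_{i,r_i-1} - \hat\xi_{i,r_i-1}$ decays exponentially, and feed it as a vanishing input into the LTV chain $x_i^{(r_i-1)} + l_{i,r_i-2}x_i^{(r_i-2)} + \cdots + l_i x_i = l_i s_i$, concluding by an ISS/cascade argument. This is more rigorous than the paper's proof, and the caveats you flag are genuine hypotheses the paper leaves implicit: you need $c_i(\bm z(t))$ and $l_i(\bm z(t))$ bounded away from zero (pointwise positivity alone does not let you pass from $V\to 0$ to $s_i \to 0$, nor keep the feedback gains nondegenerate); you need the frozen characteristic polynomial to be Hurwitz, which positivity of coefficients guarantees only up to second order (in the paper's driving application $r_i = 2$, so this is harmless, but the theorem as stated covers general $r_i$); and you need uniformity or slow-variation conditions on $\bm z(t)$ for the LTV homogeneous part to be exponentially stable. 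In short: same decomposition and same key mechanism (exponential decay of $V$ plus the feedback structure of $\hat\xi_{i,r_i-1}$), but your finishing argument is the careful version of the paper's informal one, at the price of additional regularity assumptions that the paper's statement would in fact also need.
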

\begin{proof}
 The att-CLF constraint (\ref{eqn:dclf}) can be rewritten in the time derivative form $\frac{dV(\bm y,\bm z)}{dt} + \epsilon V(\bm y,\bm z) \leq 0$. Then, according to Lyapunov stability theorem \citep{Khalil2002}, we have that $V(\bm y,\bm z)$ will be exponentially stabilized to zero. In other words, when $V(\bm y,\bm z) = 0$, we have either case $(i)$ $x_i = 0, \xi_{i,j} = 0, \forall j\in\{1,\dots, r_i-1\}, \forall i\in\{1,\dots,m\}$ or case $(ii)$ $x_i = - k_{i,1}(\bm z)\xi_{i,1}\dots -k_{i,r_i-1}(\bm z)\xi_{i,r_i-1}$. In case $(i)$, the theorem holds. In case $(ii)$, following (\ref{eqn:s-clf}), the condition is equivalent to $\xi_{i,r_i - 1} = -l_i(\bm z)x_i - l_{i,1}(\bm z)\xi_{i,1}\dots -l_{i,r_i-2}(\bm z)\xi_{i,r_i-2}$.
Since we have $\dot x_i = \xi_{i,1}, \dots, \dot \xi_{i,r_i - 2} = \xi_{i,r_i - 1}$ following the above state transformation process, $\xi_{i,r_i - 1} = -l_i(\bm z)x_i - l_{i,1}(\bm z)\xi_{i,1}\dots -l_{i,r_i-2}(\bm z)\xi_{i,r_i-2}$ is the state feedback control law that drives $x_i,\forall i\in\{1,\dots,m\}$ to the origin \citep{Khalil2002}. Thus, any control $\bm u$ that satisfies the constraint (\ref{eqn:dclf}) stabilizes the partial state $(x_1,\dots, x_m)$ to the origin for system (\ref{eqn:affine}). 
\end{proof}

\noindent\textbf{An optimization layer in NN.} Then, we can formulate an att-CLF based QP as in (\ref{eqn:neuron0}) by replacing its CLF constraint with (\ref{eqn:dclf}). 
This optimization becomes a sequence of QPs using the time discretization from \citep{Aaron2012}, and the QP can be incorporated into a neural network as a differentiable optimization layer \citep{Amos2017} (\figref{fig:learn_frame}). This optimization layer allows error backpropagation through the solution of the QP and updates the trainable parameter utilizing gradient descent. 
The CLF constraint is with stability attention. 
Recall that we can get a relaxed form to handle the infeasibility of CLFs as mentioned in introduced in \citep{Aaron2012}. We may also add a differentiable Control Barrier Functions \citep{Xiao2021bnet} for obstacle avoidance.

\subsection{Uncertainty Propagation}
\label{ssec:uncertainty}

The accuracy of state $\bm{x}$ plays a critical role in CLFs. An unbiased estimation but with high variance can lead to undesired activeness of CLF constraint in the QP, producing overall correct but jerky control. We resort to mitigating this issue with the introduction of state uncertainties. We use a set-based representation for state uncertainty since it is highly flexible to represent any kind of distribution, can be constructed based on parametric models, and is widely adopted in predictive uncertainty quantification \citep{gal2016dropout,lakshminarayanan2017simple}. Given a set of state samples $\{\bm{x}_i\}_{i=1}^N$, we solve for a set of controls $\{\bm{u}^*_i\}_{i=1}^N=\text{QP}(\{\bm{x}_i\}_{i=1}^N)$, where $\text{QP}$ is defined in \eqref{eqn:neuron0} with \eqref{eqn:dclf} replacing the original CLF constraint and can be efficiently solved by batch operation. We then use kernel density estimation to convert the control sample set to a distribution,
\begin{equation}\label{eqn:kde}
    p(\bm{u}|\bm{x},\bm{z}) = \frac{1}{Nh}\sum_{i=1}^N K(\frac{\bm{u} - \bm{u}^*_i}{h}),
\end{equation}
\noindent where $K$ is a kernel function and $h>0$ is the bandwidth. We use the Gaussian kernel with Scott's rule \citep{scott2015multivariate} for bandwidth selection. With control distribution at hand, we can easily integrate other (unnormalized) prior to obtaining the final control with a set of probing points,
\begin{equation}\label{eqn:ctrl_dist}
    \bm{u}^* = \underset{\bm{u}\in \mathcal{U}}{\argmax}~p(\bm{u})p(\bm{u}|\bm{x},\bm{z}),
\end{equation}
\noindent where the probing point set $\mathcal{U}$ can be a grid that covers the control space or even simply the control samples obtained from the QP. The prior $p(\bm{u})$ can be any prior knowledge of the design or control distribution from other modules in the system. For simplicity, we use uniform distribution as prior. To obtain predictive uncertainty, we follow \citep{kendall2017uncertainties} that predicts the mean and log variance of a normal distribution for state estimate, from which we draw samples and construct the set $\{\bm{x}_i\}_{i=1}^N$. Note that we can flexibly plug in other predictive uncertainty quantification techniques.

\section{Experiments}
\label{sec:experiments}

\subsection{Platform \& Dataset}
\noindent\textbf{Hardware Setup.} We collect data and deploy our models on a full-scale vehicle (2019 Lexus RX 450H) with autonomous driving capability. The onboard computation includes an NVIDIA 2080Ti GPU and an AMD Ryzen 7 3800X 8-Core Processor. The primary perception sensor is a 30Hz BFS-PGE-23S3C-CS RGB camera with resolution $960\times 600$ and $130^\circ$ horizontal field-of-view. The car is also equipped with IMUs and wheel encoders that provide steering feedback and odometry, as well as a centimeter-level accurate OxTS differential GPS for evaluation purposes only.
\begin{table}[t]
    \centering
    \begin{minipage}[c]{0.55\linewidth}
        \centering
        \footnotesize
        \begin{tabular}[b]{cccc}
            \toprule
            \textbf{Method} & \textbf{Obs.} & \textbf{\makecell{Mean\\ Dev.}} & \textbf{\makecell{Infer.\\Time (s)}} \\ 
            \midrule
            NMPC & State & \bf{0.011} &  0.818 \\
            CLF \citep{Aaron2012} & State & 0.150 &
            0.005 \\
            att-CLF (ours) & State & 0.085 & 0.028 \\
            \midrule
            V-E2E \citep{amini2021vista} & Image & 0.573 & 0.002 \\
            E2E-CLF \citep{Amos2017} & Image & 0.328 & 0.033 \\
            att-CLF (ours) & Image & \bf{0.218} & 0.032 \\
            \bottomrule
        \end{tabular}
    \end{minipage}
    \hfill
    \begin{minipage}[c]{0.4\linewidth}
        \centering
        \includegraphics[trim=0 10mm 0 0,width=1\linewidth]{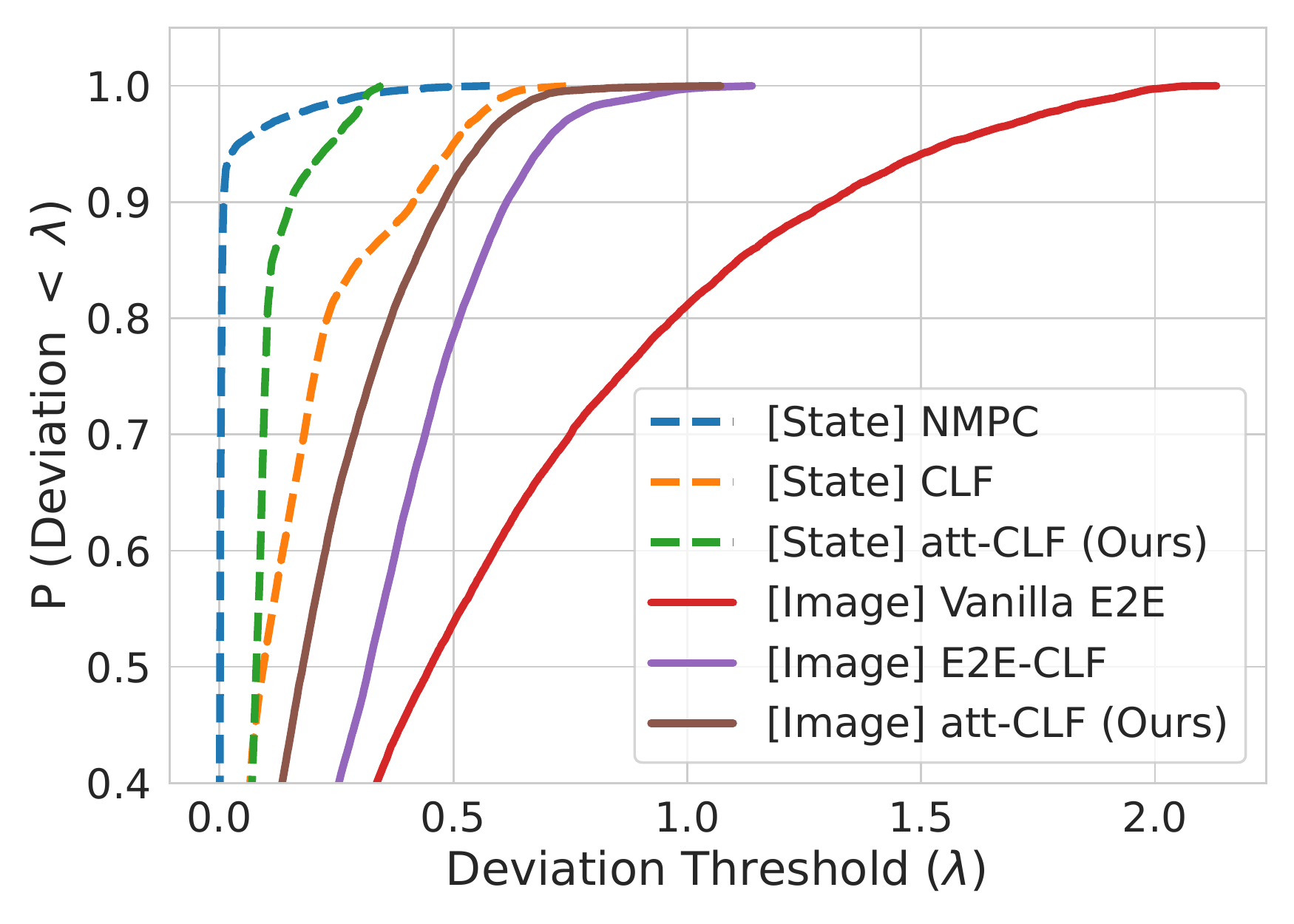}
    \end{minipage}
    \captionlistentry[figure]{A table beside a figure}
    \captionsetup{labelformat=andfigure}
    \label{fig:deviation_recall_plot_in_vista}
    \caption{\textbf{Left:} Quantitative comparison. \textbf{Right:} Probability of deviation from lane center smaller than a threshold $\lambda$ (the larger area under the curver, the better).}
    \label{tab:quant_res}
    \vspace{-5mm}
\end{table}

\noindent\textbf{Simulation and Data Generation.} We adopt a training strategy called guided policy learning that leverages a data-driven simulator \textit{VISTA} \citep{amini2021vista} to augment a real-world dataset with diverse synthetic data for robust policy learning. The ground truth controls (labels) are generated using nonlinear model predictive control (NMPC). Although computationally expensive, NMPC is tractable offline. 
The simulation is built upon roughly 2-hour real-world driving data collected at a different time of day, weather conditions, and seasons of a year, in total roughly 200k images.

\subsection{Implementation Details}
\label{ssec:imp_details}
\noindent{\bf Vehicle dynamics.} We consider the vehicle dynamics defined with respect to a reference trajectory as shown in \citep{Rucco2015}:
\begin{equation}\label{eqn:vehicle}
\dot s = \frac{v\cos(\mu + \beta)}{1 - d\kappa},\;\;\; \dot d =  v\sin(\mu + \beta), \;\;\;\dot \mu = \frac{v}{l_r}\sin\beta - \kappa\frac{v\cos(\mu + \beta)}{1 - d\kappa}, \;\;\;\dot v = u_a, \;\;\;\dot \delta = u_{\omega},
\end{equation}
where $\bm x = (s, d, \mu, v, \delta)$, $s$ and $d$ are respectively the along-trajectory distance and lateral distance of the vehicle Center of Gravity (CoG) with respect to the closest point on the reference trajectory; $\mu$ is local heading error determined by the global vehicle heading $\psi$ and the tangent angle $\phi$ of the closest point on the reference trajectory (i.e., $\psi = \phi + \mu$); 
$v$, $u_a$ denote the vehicle linear speed and acceleration; $\delta$, $u_\omega$ denote the steering angle and steering rate; $\kappa$ is the curvature of the reference trajectory at the closest point; $l_r$ is the length of the vehicle from the tail to the CoG and $\beta = \arctan\left(\frac{l_r}{l_r + l_f}\tan\delta\right)$, where $l_f$ is the length of the vehicle from the head to the CoG.
\begin{figure}[t]
    \centering
    \subfigure{\includegraphics[width=0.3\linewidth]{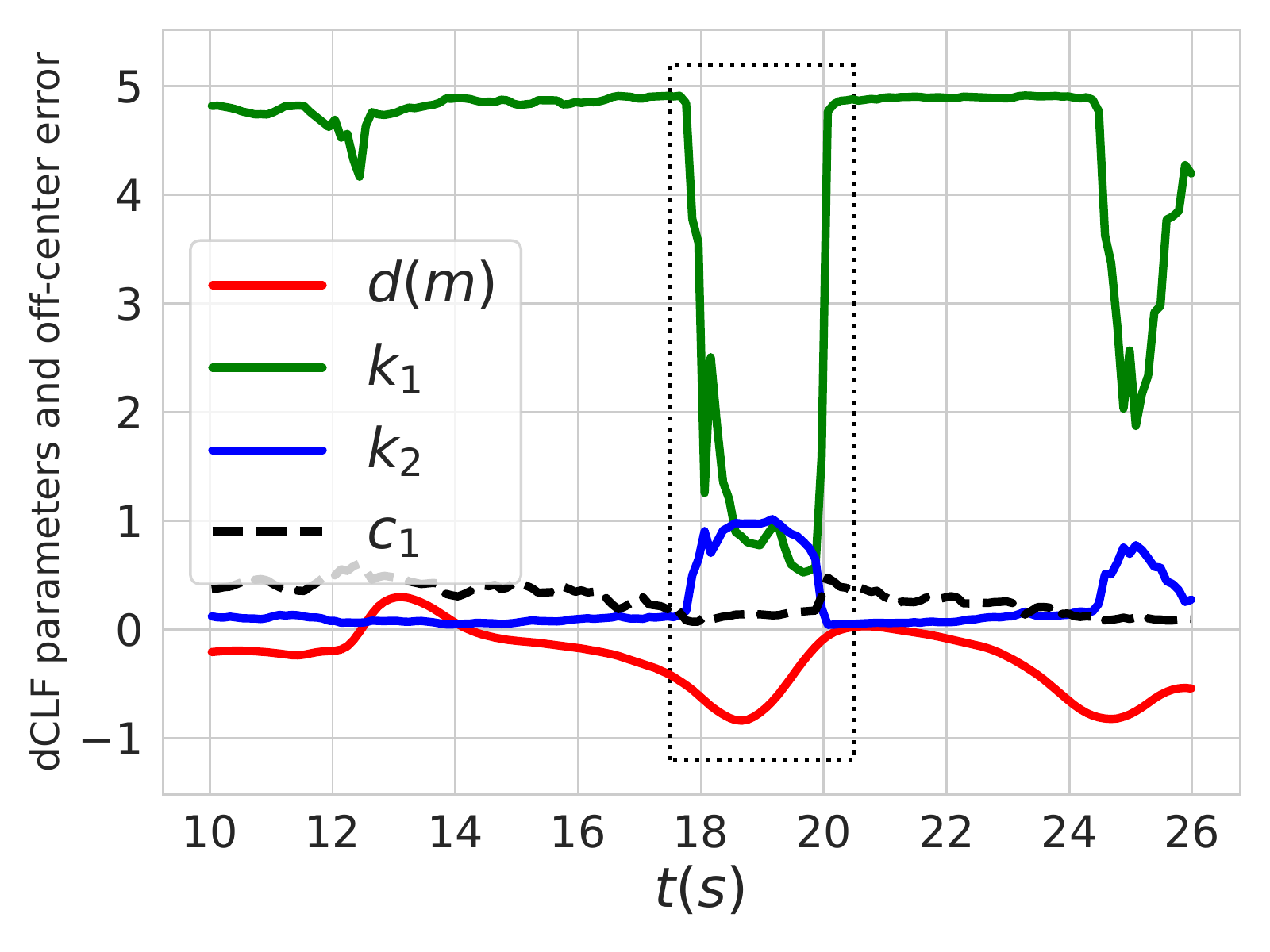}}
    \qquad
    \subfigure{\includegraphics[width=0.45\linewidth]{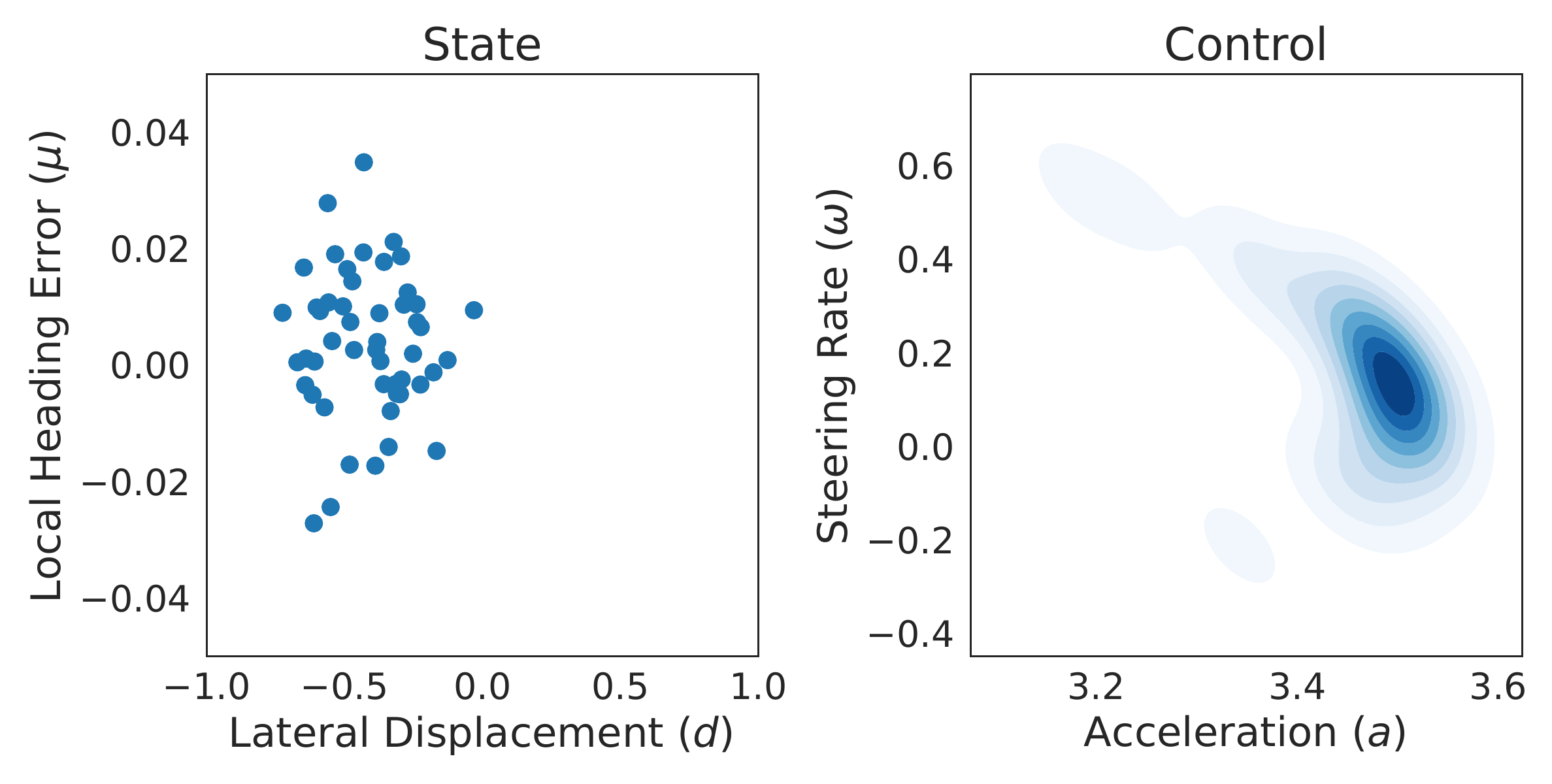}}
    \vspace{-3mm}
    \caption{\textbf{(a)} The profile of att-CLF attention $k_1,k_2,c_1$ under varying lateral off-center distance $d$. \textbf{(b)} Uncertainty propagation from sample set in state space to density in control space. 
    }
    \label{fig:uncertainty_prop}
    \vspace{-5mm}
\end{figure}

\noindent\textbf{Att-CLFs for end-to-end driving.} In this problem, we only wish to stabilize $d$ and $\mu$ to 0 for lane tracking. The relative degrees of both $d$ and $\mu$ are two. Thus, we formulate a state-feedback based att-CLF following Sec. \ref{sec:converge}. The desired states for $d$ and $\mu$ are $\hat d = - \xi_1, \hat \mu = - \xi_2$, where $\xi_1 = v\sin(\mu + \beta), \xi_2 = (\frac{v}{l_r}\sin\beta- \kappa\frac{v\cos(\mu + \beta)}{1 - d\kappa})$. The transformed state vector is then $\bm y = (d, \mu, \xi_1, \xi_2)$. The att-CLF in this case is then $V(\bm y, \bm z) = (d + k_1(\bm z) \xi_1)^2 + c_1(\bm z)\left(\mu + k_2(\bm z)\xi_2\right)^2$ (we set $c_2(\bm z)$ to one). We assume the observation is piece-wise constant $\dot{\bm z} = 0$, similar to control approximation in \citep{Aaron2012}. We leave the estimation of $\dot{\bm z}$ or its bound to future work.

\noindent\textbf{Model \& Learning.} The input is an RGB image. We extract features with a 5-layer CNN with kernel size as 3, channels as (24, 36, 48, 64, 64) and group norm, followed by average pooling. The feature vector is then fed to a 64-dim LSTM for temporal reasoning. We then use independent MLP heads of hidden size as (32, 32) for different outputs $u_a,u_\omega$ and intermediate estimates $v,\delta,d,\mu,\kappa,k_1,k_2,c_1$. 
We use L2 loss for control $u_a,u_\omega$ with ground truth obtained from NMPC. For state estimation $d,\mu,\kappa$, which is modeled as distribution as mentioned in \secref{ssec:uncertainty}, we follow predictive uncertainty loss in \citep{kendall2017uncertainties}. We also adopt auxiliary L2 loss on $v,\delta$ for more efficient and stable learning. We use Adam optimizer with a learning rate of 0.001. 

\subsection{Offline Closed-loop Evaluation}


We evaluate our policies in a closed-loop setting in \textit{VISTA} \citep{amini2021vista} based on a test set, where we collect 100 episodes (each of maximal 300 steps) with random scenes and initial poses. The baselines include i) \textit{nonlinear MPC} (NMPC) ii) classical \textit{CLF} \citep{Aaron2012} iii) \textit{vanilla end-to-end learning} (V-E2E) \citep{amini2021vista} and iv) \textit{end-to-end CLF} (E2E-CLF) where we use CNN+LSTM to extract state followed by differentiable QP (dQP) \citep{Amos2017} with fixed CLF parameters. 
The proposed att-CLF method (a) comparing to \textit{V-E2E} -- augment end-to-end learning with better lane tracking error (distribution)  (b) comparing to \textit{CLF} and \textit{E2E-CLF} -- outperform other CLF baselines with stability attention and (c) comparing to \textit{NMPC} -- runs much faster with simple dQP formulation, as shown in \figref{fig:deviation_recall_plot_in_vista} and \tabref{tab:quant_res}.


\noindent\textbf{Attention of stability.} 
When the car is more off-lane ($|d|$ is large), the att-CLF pays more attention to the lateral distance $d$ (larger $k_1$); conversely, when the car is closer to the lane center ($|d|$ is smaller), the att-CLF pays more attention to heading $\mu$ (larger $k_2$ and smaller $c_1$) to achieve stable lane following, as shown in \figref{fig:uncertainty_prop}a. Otherwise, the vehicle can overshoot around the lane center line due to the myopic solving method and state uncertainties. 
This result demonstrates that att-CLF can learn effective strategy by adapting its parameters in response to different situations.

\noindent\textbf{Uncertainty propagation.} 
A simple interpretation for \figref{fig:uncertainty_prop}b is to look at how the current state affects steering rate control $\omega$. While all state samples indicate the car is on the right of the lane center ($d<0$), there is some disagreement on whether the car heading is clockwise ($\mu<0$) or counter-clockwise ($\mu >0$)  with respect to the road curvature. In the case where $d<0,\mu<0.01$ (car is at the right to the lane center and does not orient to the extreme left), the att-CLF will most likely suggest turning left (steering rate $\omega > 0$), which leads to the major mode in the control distribution. 
Also, the set-based uncertainty can be computed via batch operation in dQP and has acceptable computational overhead  (28ms for 1 sample, 43ms for 10 samples, 50ms for 50 samples).


\begin{figure}
    \begin{minipage}[c]{0.5\linewidth}
    	\flushleft
    	\includegraphics[width=1\linewidth]{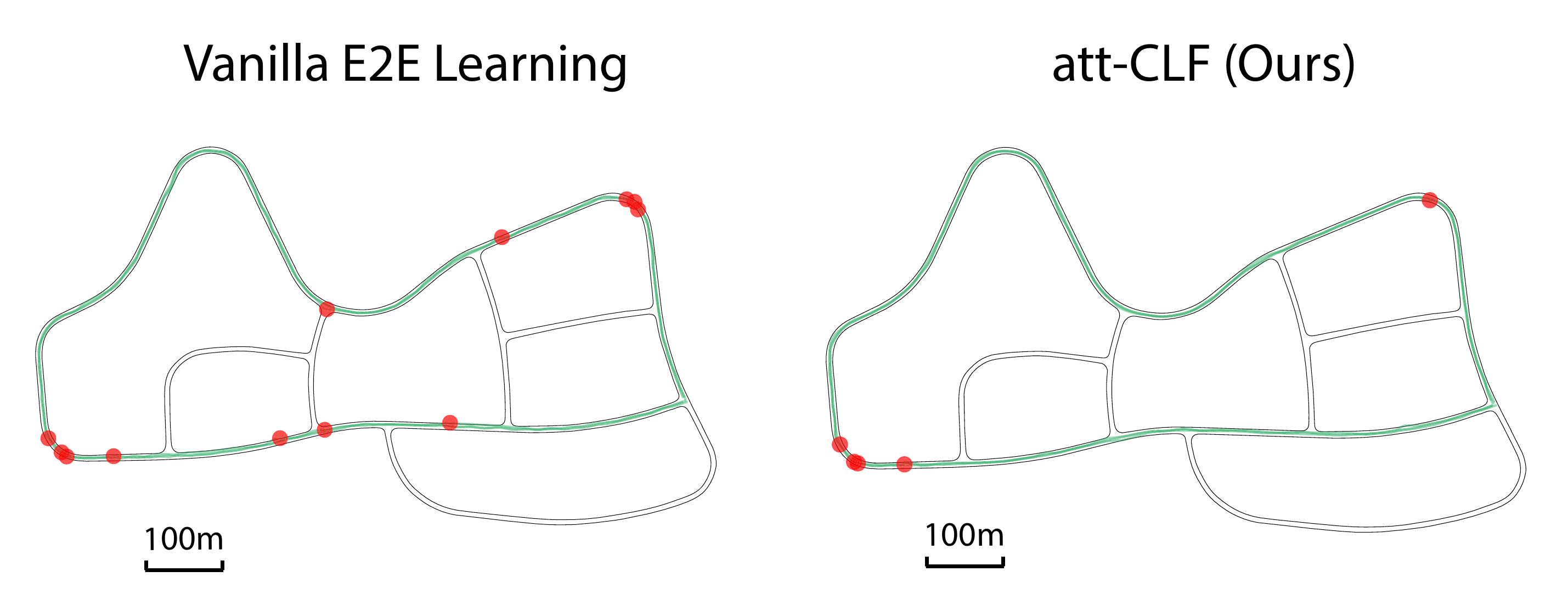} 
    \end{minipage}
    \begin{minipage}[c]{0.4\linewidth}
        \begin{center}
            \footnotesize
            \begin{tabular}{ccc}
            \toprule
            \textbf{Method} & \textbf{\makecell{Avg. \# of\\ Intervention}} & \textbf{\makecell{Mean\\ Deviation (m)}} \\ 
            \midrule
            Vanilla E2E & 4.0 $\pm$ 1.6 & 0.779 $\pm$ 0.178 \\
            att-CLF (ours) & \textbf{1.6 $\pm$ 0.4} & \textbf{0.687 $\pm$ 0.083} \\
            \bottomrule
            \end{tabular}
        \end{center}
    \end{minipage}
    \vspace{-3mm}
    \captionlistentry[table]{}
    \captionsetup{labelformat=andtable}
    \label{tab:real_car_quant}
    \caption{\textbf{Left:} Real-world deployment. Policy trajectories (3 trials at different times of day) and intervention locations (red dot). \textbf{Right:} Average performance of the real-car test.}
    \label{fig:devens_loop}
    \vspace{-5mm}
\end{figure}

\subsection{Real-car Deployment}

\begin{figure}[b]
\vspace{-4mm}
	\centering
    \subfigure[]{
	\includegraphics[width=0.55\linewidth]{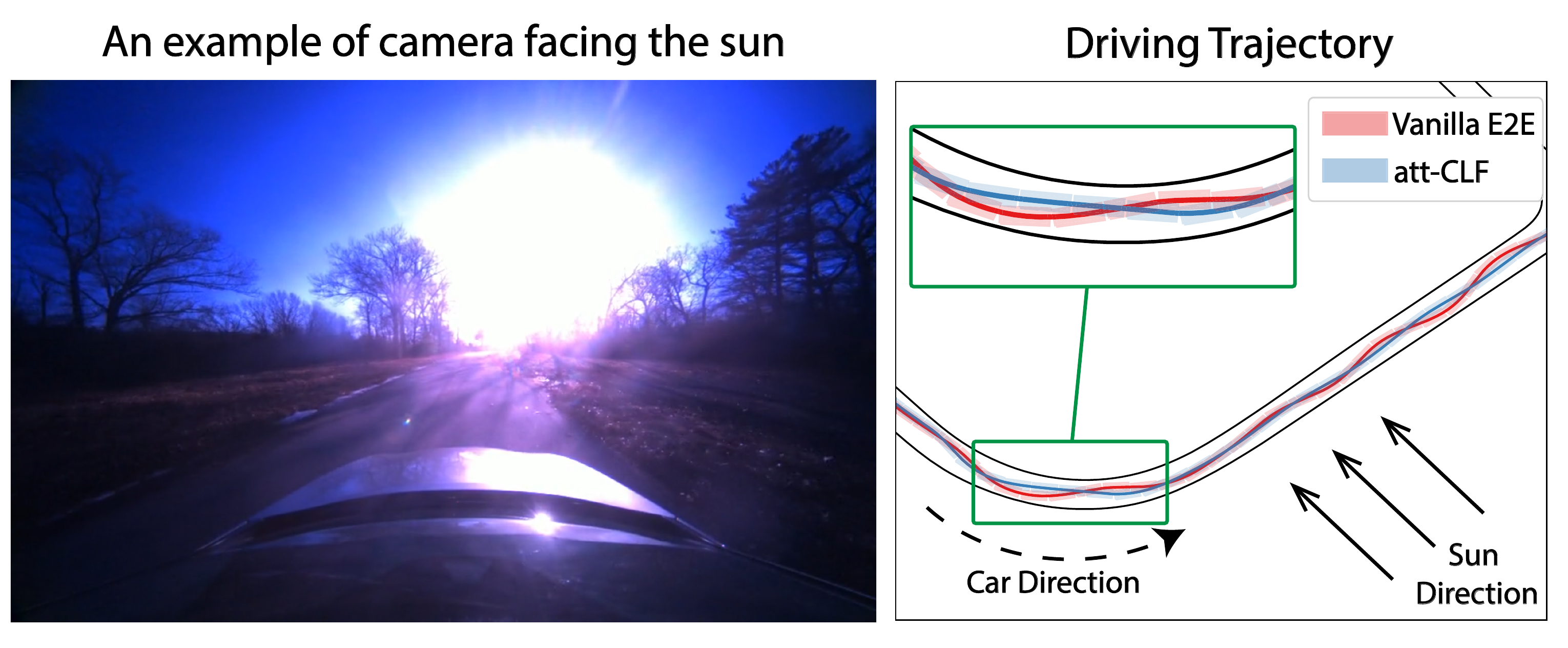}
	}
	\subfigure[]{\includegraphics[width=0.3\linewidth]{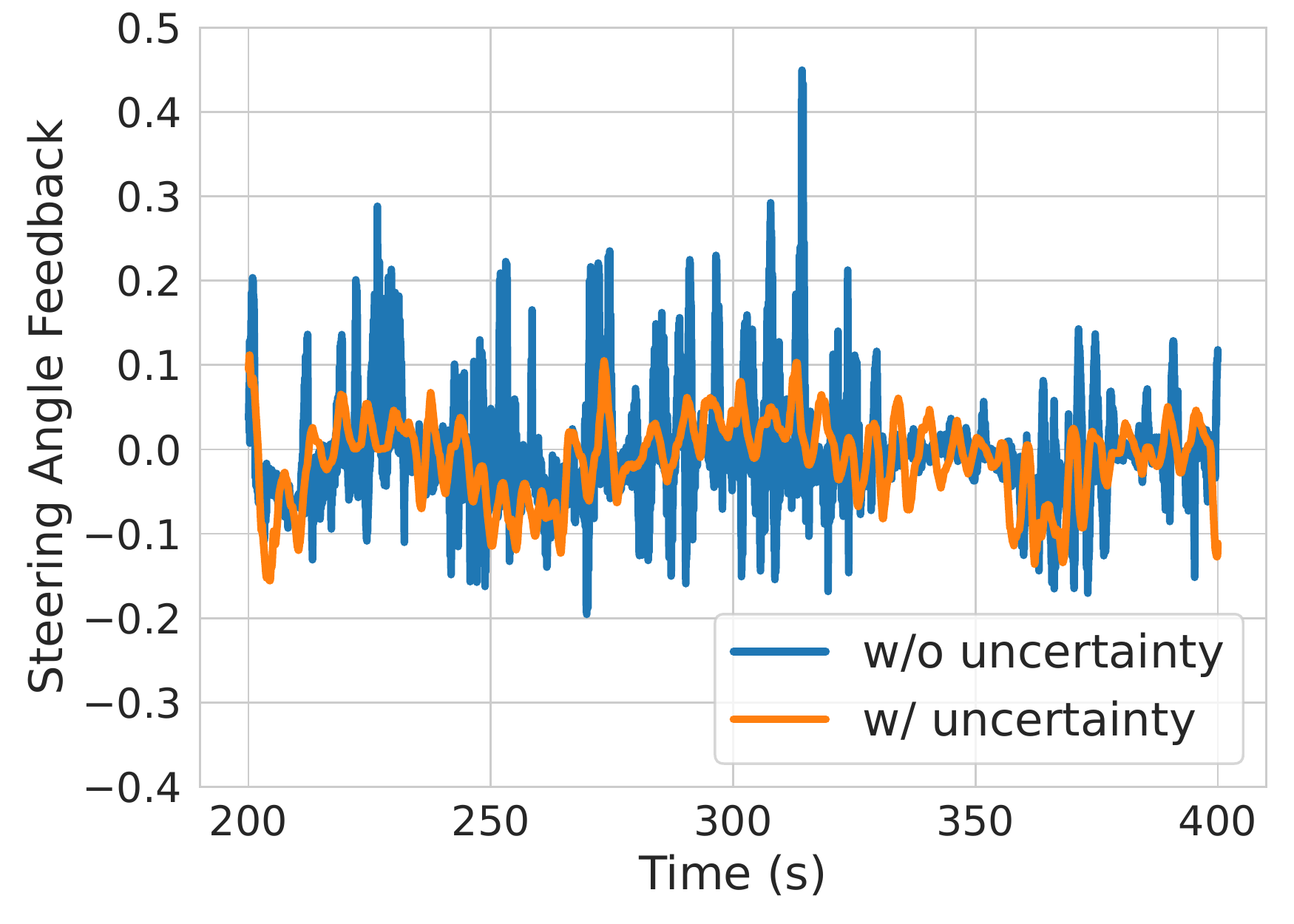}}
	\vspace{-5mm}
	\caption{\textbf{(a)} Real car testing results when the visual input is corrupted by strong sunlight. \textbf{(b)} Steering angle feedback of att-CLF without and with uncertainty propagation on the real car.}
	\label{fig:traj_facing_sun}
	\vspace{-5mm}
\end{figure}
We deploy our end-to-end policy on a full-scale autonomous vehicle. 
We test the policies for 3 trials at different times of the day. Human make intervention when the vehicle is off the road. \tabref{tab:real_car_quant} shows the proposed att-CLF achieves less human interventions and smaller deviation from the lane center. \figref{fig:devens_loop} overlays the driving trajectories from all trials. We observe that both vanilla E2E and att-CLF fail more frequently at the lower-left sharp turn since direct sunlight to the camera along with the reflection from the snow at the side of the road severely corrupt the RGB image. Also, most interventions occur in the trials around 3-4 pm since the sun at that time most likely points directly to the camera, where lighting conditions and shadows in the real world pose great challenges. 


\noindent\textbf{Unstable maneuver in challenging scenarios.}
In \figref{fig:traj_facing_sun}a, we show an example \footnote{\scriptsize Video: \url{https://drive.google.com/file/d/12g0FctJkV7-F_YnCJD7Eurr8d4S7bhGm/view?usp=sharing}} where the camera is pointed directly to the sun. Under the corruption of sunlight, policies tend to exhibit a chattering maneuver as opposed to running the test with mild lighting conditions (in the evening or at noon). The att-CLF augments end-to-end policies with stability and produces smoother trajectories, while vanilla E2E easily makes the vehicle overshoot around the lane center, shown as the red curves.

\noindent\textbf{Uncertainty.} 
\figref{fig:traj_facing_sun}b shows steering feedback from IMU. Introducing uncertainty greatly improve the smoothness of the control. Furthermore, we observe better stability with uncertainty at trials mostly corrupted by sunlight: mean deviation $1.29$m (w/o uncertainty) vs $0.84$m (w/ uncertainty). 

\vspace{-1.5mm}



\section{Related Work}
\label{sec:related_work}

\noindent\textbf{Lyapunov functions and learning.}
In spite of the huge success, the stability properties of deep learning models are less understood \citep{lechner2020gershgorin}. 
One way to achieve this for neural controllers is Lyapunov stability \citep{slotine1991applied}. The premise is to find a Lyapunov function that synthesizes a Lyapunov-stable controller. For example, \citep{jarvis2003some,majumdar2013control} demonstrated that Lyapunov functions can be obtained by solving a sum-of-squares program for polynomial dynamics. \citep{ravanbakhsh2019learning} proposed a learner-falsifier framework to search for valid Lyapunov functions. Recent works further leveraged the expressive power of neural networks in a Lyapunov framework \citep{richards2018lyapunov,chang2019neural}. 
Other works aim to verify the Lyapunov conditions in neural networks \citep{abate2020formal,chang2019neural,dai2020counter,dai2021lyapunov}.
In this work, we work on control Lyapunov functions (CLFs) \citep{primbs1999nonlinear} and leverage differentiable optimization to construct attention of CLFs.


\noindent\textbf{Differentiable optimization for control.}
Recent advances in differentiable optimization enables seamless integration to neural network \citep{Amos2017,agrawal2019differentiable}. Such compatibility opens new ways of combining learning-based models with classical control, which can be mostly formulated as optimization problems, such as model predictive control \citep{Amos2018}, convex optimal control \citep{okada2017path}, path integral control \citep{agrawal2020learning}, and control barrier function \citep{Xiao2021bnet}. 
In this work, we leverage differentiable QP \citep{Amos2017} for CLFs to augment end-to-end driving policies with stability properties.


\noindent\textbf{End-to-end driving policy learning.} 
Apart from traditional autonomous driving pipelines with independent perception, planning, and control modules, many works \citep{Pomerleau1989,lechner2020neural} have demonstrated the ability of end-to-end policy learning that directly infers control from perception. This framework has been shown to solve various tasks including lane keeping \citep{xu2017end}, obstacle avoidance \citep{wang2021learning,bohez2017sensor}, and navigation \citep{amini2019variational,codevilla2018end}. 
The major pitfall of end-to-end driving policy is the black-box model
that hinders the possibility of introducing properties like stability as opposed to the classical control module. In this work, we leverage recent advances in data-driven simulation \citep{amini2021vista} for sim-to-real transfer and embed stability in end-to-end driving policies.

\vspace{-1.5mm}
\section{Conclusion}
\label{sec:conclusion}

We propose to leverage CLFs to embed vision-based end-to-end driving policies with stability properties and introduce the novel notion of stability attention in CLFs that allows to handle environmental changes and brings flexible learning capability. 
We compare with other baselines and demonstrate the effectiveness of att-CLFs in a high-fidelity simulator and on a real autonomous car.

\acks{
The research was supported in parts by Capgemini. It was also partially sponsored by the United States Air Force Research Laboratory and the United States Air Force Artificial Intelligence Accelerator and was accomplished under Cooperative Agreement Number FA8750-19-2-1000. The views and conclusions contained in this document are those of the authors and should not be interpreted as representing the official policies, either expressed or implied, of the United States Air Force or the U.S. Government. The U.S. Government is authorized to reproduce and distribute reprints for Government purposes notwithstanding any copyright notation herein. This work was further supported by The Boeing Company and the Office of Naval Research (ONR) Grant N00014-18-1-2830.
}


\bibliography{bibtex/bib/myref}

\end{document}